\theoremstyle{definition}
\newtheorem{theorem}{Theorem}
\newcommand\sizeImage{0.13}
\newcommand\sizeHorizontalSpace{-7pt}
\newcommand{\R}{\mathbb{R}}      
\newcommand{\z}{{\rm\bf z}}      
\newcommand{\Z}{\mathcal{Z}}     
\newcommand{\X}{\mathcal{X}}     
\begin{document}

\title{Differentially Private Imaging via Latent Space Manipulation}

\author{

    Tao Li and Chris Clifton \\
    {\normalsize Department of Computer Science, Purdue University, West Lafayette, Indiana, USA} \\
    {\tt\small \{taoli,clifton\}@purdue.edu}
}

\maketitle

\begin{abstract}
{
There is growing concern about image privacy
due to the popularity of social media and photo devices, along with increasing use of face recognition systems. 
However, established image de-identification techniques are either too subject to re-identification, produce photos that are insufficiently realistic, or both.
To tackle this, we present a novel approach for image obfuscation by manipulating latent spaces of an unconditionally trained generative model that is able to synthesize photo-realistic facial images of high resolution.
This manipulation is done in a way that satisfies the formal privacy standard of local differential privacy.
To our knowledge, this is the first approach to image privacy that satisfies $\varepsilon$-differential privacy \emph{for the person.}
}
\end{abstract}

\section{Introduction}\label{sec:introduction}
Image obfuscation techniques have been used to protect sensitive information, such as human faces and confidential texts.
However, recent advances in machine learning, especially deep learning, make standard obfuscation methods such as pixelization and blurring less effective at protecting privacy \cite{mcpherson2016defeating};
it has been showed that over 90\% of blurred faces can be re-identified by deep convolutional neural networks or commerical face recognition systems \cite{li2021deepblur}.

We envision scenarios where the image should convey the general tone and activity (e.g., facial expressions), but not identify individuals.  For example, one could post photos on social media retaining images of friends, but protecting identity of bystanders while maintaining the general feel of the image; an example is given in \cref{fig:example}.  In a very different scenario, surveillance footage could be viewed by police to identify suspicious acts, but identity of those in the image would only be available through appropriate court order, protecting against (possibly unintended) profiling and ``guilt by association''.  In both scenarios, blurring/pixelization fails to preserve desired semantics (e.g., facial expression), and fails to provide the desired privacy protection.

Many attempts have been made to obfuscate images and some privacy guarantees are provided.
A pixelization method proposed in \cite{fan2018image} satisfies pixel-wise $\epsilon$-differential privacy~\cite{dwork2006calibrating}.
However, the utility of the pixelized images is far from satisfactory, due to the high perturbation noise needed to reasonably hide the original; the images appear like traditional pixelization or blurring techniques.
A more serious problem is that this provides differential privacy for \emph{pixels}, not for the individuals pictured in the image.  Not only are the images highly distorted, as with ad-hoc approaches to pixelization and blurring they are subject to re-identification of the individuals in the image \cite{gross2005integrating,gross2006model,gross2009face}.

In this paper, we show how differential privacy can be provided at the level of the individual in the image.  The key idea is that we transform the image into a semantic latent space.  We then add random noise to the latent space representation in a way that satisfies $\varepsilon$-differential privacy.  We then generate a new image from the privatized latent space representation.  This ensures a formal privacy guarantee, while providing an image that preserves important characteristics of the original.

\begin{figure}[t]
\begin{center}
    \includegraphics[width=\sizeImage\textwidth]{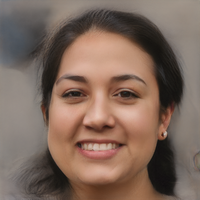}
    \includegraphics[width=\sizeImage\textwidth]{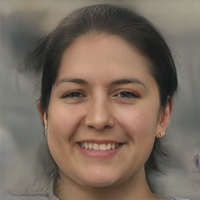}
    \includegraphics[width=\sizeImage\textwidth]{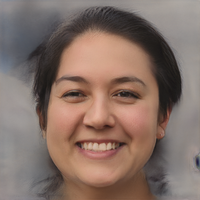}
    \includegraphics[width=\sizeImage\textwidth]{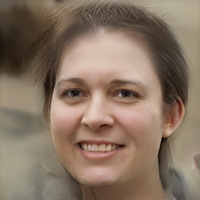}
    \includegraphics[width=\sizeImage\textwidth]{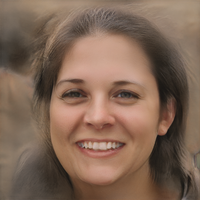}
    \includegraphics[width=\sizeImage\textwidth]{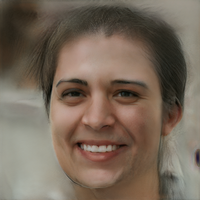}
\end{center}
\caption{Can you identify the authors?
    These are images of the authors, with noise added 
    that satisfies 
    differential privacy sufficient to prevent identification of the authors if you do not already know who they are.
}
\label{fig:teaser}
\end{figure}

\begin{figure*}[th]
\captionsetup[subfigure]{justification=centering}
\centering
    \begin{subfigure}{0.40\textwidth}
        \includegraphics[width=\textwidth]{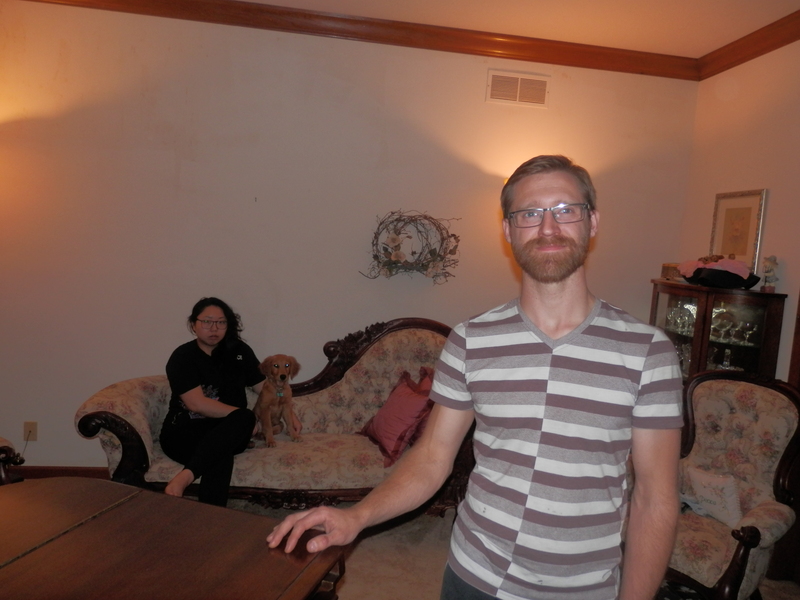}
        \caption{Original}
        \label{fig:example-original}
    \end{subfigure}
    ~
    \begin{subfigure}{0.40\textwidth}
        \includegraphics[width=\textwidth]{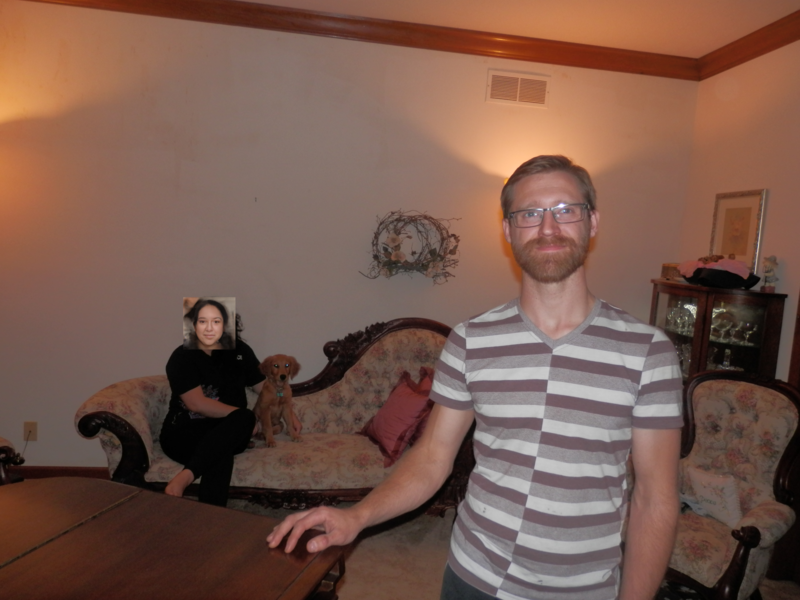}
        \caption{Differentially private bystander}
        \label{fig:example-anonymized}
    \end{subfigure}
\caption{Protecting bystanders on social media.  The person in the background has been replaced with a differentially private version, while the subject of the image is maintained.  Note that in real use, the background and lighting would be blended (as a postprocessing step); for clarity we are showing only the facial image manipulation.}
\label{fig:example}
\end{figure*}

Other obfuscation methods have been proposed recently to balance privacy and utility. 
For example, adding noise to an SVD-transformation is proposed in~\cite{fan2019practical}; however, the approach does not formalize privacy in the sense of identifying \emph{individuals}.
\cite{ren2018learning} makes use of generative adversarial networks to obfuscate a face in the context of detecting and depicting (anonymized) actions.
\cite{sun2018natural} adopted generative adversarial networks (GANs) for facial image obfuscation by identifying a face and accordingly inpainting it with a synthesized face alternative.
This unfortunately has the potential to lose important characteristics of the original image.
Another approach leveraged a conditional GAN to manipulate facial attributions in accordance with different privacy requirements~\cite{li2019anonymousnet}.
These approaches suffer the common failing that they do not provide a \emph{formal} privacy guarantee.  As such, they may be subject to re-identification or re-construction attacks.

Building on top of previous works, this paper presents a practical image obfuscation method with provable guarantees and some level of photo-realism.
Unlike \cite{sun2018natural} which replaces the entire face with an arbitrary substitute, and \cite{li2019anonymousnet} which obfuscates facial images on a discrete attribute space, this work further extends facial image manipulation to a continuous latent space.  Applying  differential privacy in this latent semantic space provides greater photo-realism while satisfying rigorous privacy guarantees.

A key to formal privacy methods is that there is randomness in the approach:
the same image privatized twice will not look the same (as demonstrated in \cref{fig:teaser},
which includes multiple images of some authors.)
This randomness is a key component to preventing reconstruction attacks. 
We show that randomized manipulations in the latent semantic space can be expected to provide realistic images.
The method guarantees similarity-based indistinguishability among images, providing privacy
guarantees in worst-case scenarios and boosting the utility
of the obfuscated image.

Our main contributions are:
\begin{itemize}
    \item The first definition of $\varepsilon$-differential privacy for images that protects \emph{individuals} in the image;
    \item A practical framework for real-world differentially private imaging that maintains a level of image semantics;
    \item
        We introduce a clipping step in image latent space that enables a formal guarantee of $\varepsilon$-differential privacy with significantly improved fidelity.
\end{itemize}

In the rest of the paper:
\Cref{sec:literature} discusses related works;
We formalize our approach in \cref{sec:method} and show how it satisfies differential privacy with a practical framework;
\Cref{sec:experiment} details our implementation and demonstrates the results;
\Cref{sec:conclusion} concludes the paper.

\begin{figure*}[t]
\centering
\includegraphics[width=\textwidth]{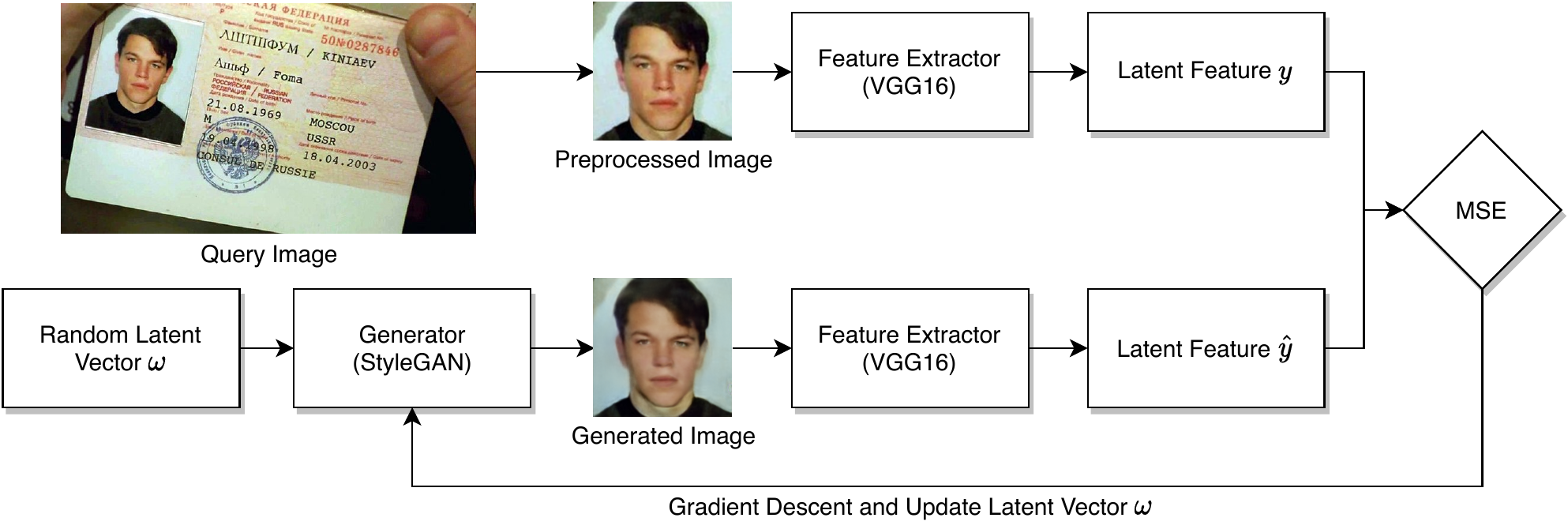}
\caption{
    Feature optimization pipeline to encode an arbitrary image.
    We first crop and align the query image and compare it with a random image generated by StyleGAN~\cite{karras2019style} using a loss function (e.g., mean squared error).
    Instead of comparing the two images in a pixel-wise fashion, we leverage a deep feature extractor (i.e., VGG16~\cite{simonyan2014very}) to obtain latent representations of the images, and then apply gradient descent to optimize the latent code $\mathbf{\hat{y}}$ of the random image until the synthesized image is close enough to the query one. 
}
\label{fig:feature_optimization_pipeline}
\end{figure*}

\section{Related Work}\label{sec:literature}

Our work bears some similarity to differentially private synthetic data generation~\cite{zhang2017privbayes,jordon2018pate,acs2018differentially,CensusDataCompetition}, perhaps most notably the use of generative networks for synthetic data~\cite{zhu2019learning,torkzadehmahani2019dp}.
However, the problem addressed in those works is generating synthetic data representing a \emph{set} of individuals, rather than the local differential privacy we achieve.  If applied directly to an image, such approaches would provide privacy for \emph{pixels}, not for persons - similar to the work of \cite{fan2018image,li2019anonymousnet} discussed in the introduction.
We also noted other transformation-based approaches~\cite{fan2019practical,ren2018learning,sun2018natural} that do not provide formal privacy guarantees.

Other work has shown that ML models can memorize (and subsequently leak) parts of their training data~\cite{song2017machine}. 
This can be exploited to expose private details about members of the training dataset~\cite{fredrikson2015model}. 
These attacks have spurred a push towards differentially private model training~\cite{abadi2016deep}, which uses techniques from the field of differential privacy to protect sensitive characteristics of training data.
This is a very different problem, our goal is to protect images that are \emph{not} contained in the training data.

There is also work targeted to defeating existing face recognition systems.
Much of the work in image privacy results in substantial distortion.  As with pixelization, these often produce images that are not visually pleasing.  Methods include
distorting images to make them unrecognizable~\cite{li2019anonymousnet,sun2018hybrid,wu2018privacy}, and
producing adversarial patches in the form of bright patterns printed on sweatshirts or signs, which prevent facial recognition algorithms from even registering their wearer as a person~\cite{thys2019fooling,wu2019making}.
Finally, given access to an image classification model, ``clean-label poison attacks'' can cause the model to misidentify a single, pre-selected image~\cite{shafahi2018poison,zhu2019transferable}.
However, these are targeted against facial recognition systems designed without regard to the privacy protection, and could be subject to targeted re-identification attacks such as \cite{gross2005integrating,gross2006model,gross2009face}.

\section{Differentially Private Imaging}\label{sec:method}
From the above, it should now seem obvious how we can get differential privacy:  Add noise to the latent vector $\z$ in a way that satisfies differential privacy.  This leaves three questions, addressed in this section.  The first is what mechanism do we use to add noise?  There are multiple mechanisms providing differential privacy; the right choice depends on how noise impacts the final result.  The second question is how much noise do we need to add?  This requires understanding the \emph{sensitivity} of the latent vector $\z$:  How much it can vary across different input images.
This will be covered in \cref{subsec:privacy_mechanism}.
The final issue is how to obtain the latent vector $\z$ in the first place?
Answering these questions requires a better understanding of the latent space.

\subsection{Latent Space and Image Encoding}\label{subsec:feature_optimization}
It has been widely observed that when linearly interpolating two latent codes $\z_1$ and $\z_2$, the appearance of the corresponding synthesis changes continuously~\cite{radford2015unsupervised,brock2018large,karras2019style}.
\cite{radford2015unsupervised} and \cite{bojanowski2017optimizing} identifies some vector arithmetic phenomenon in a GAN's latent space,
such as addition and subtraction invariance, implying the linearity property of latent spaces,
as well as monotonicity and Euclidean distance.
\cite{shen2019interpreting} provides a proof.
The linear interpolation between $\z_1$ and $\z_2$ forms a direction in latent space $\Z$, which further defines a hyperplane,
and the hyperplane splits a binary semantics.

\begin{figure*}[t]
    \begin{center}
    \begin{subfigure}{\sizeImage\textwidth}
        \centering
        \includegraphics[width=\textwidth]{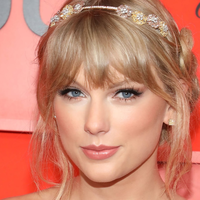}
    \end{subfigure}
    \Large{$\Rightarrow$}
    \begin{subfigure}{0.32\textwidth}
        \centering
        \includegraphics[width=\textwidth]{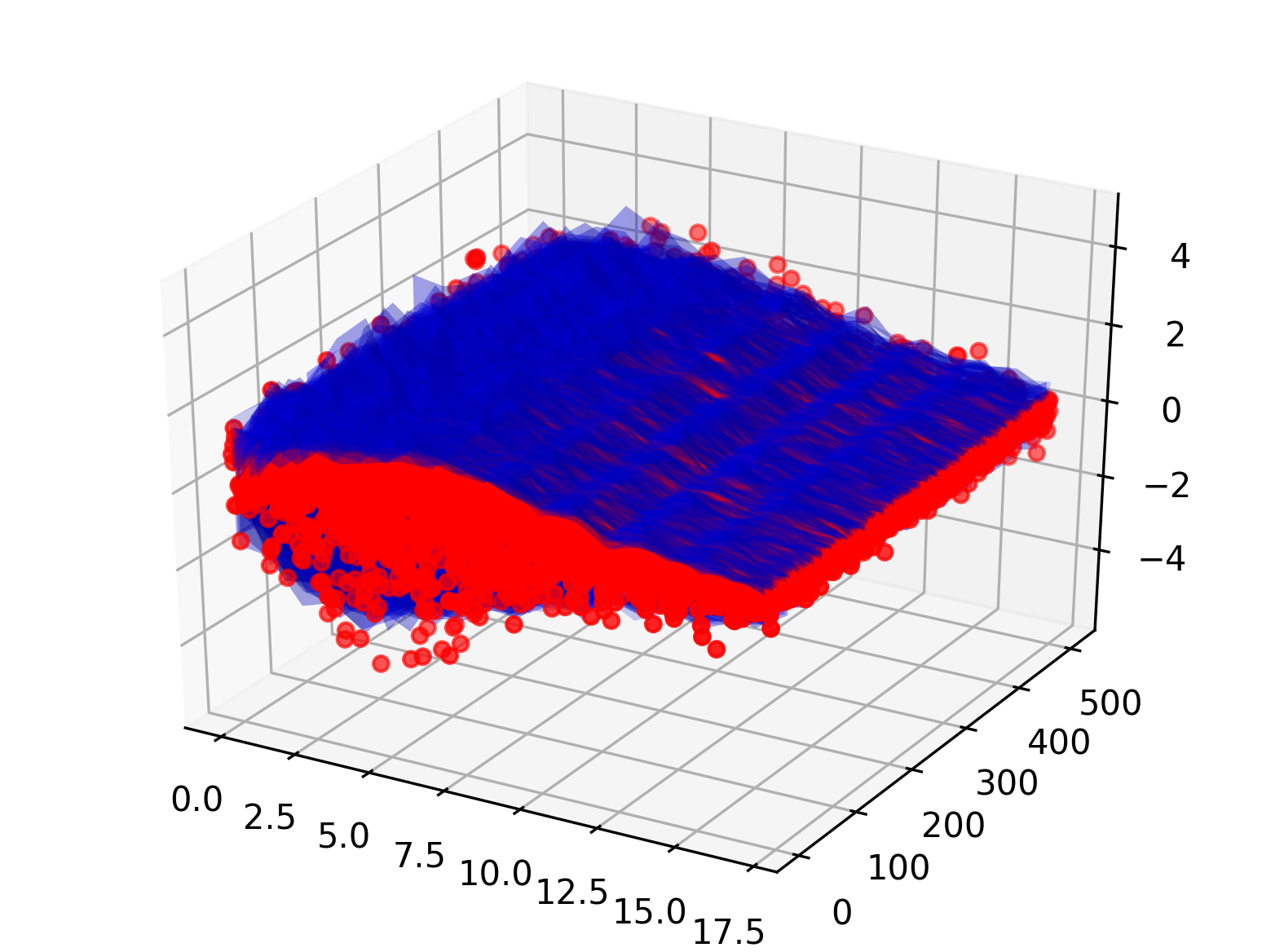}
    \end{subfigure}
    \Large{$\Rightarrow$}
    \begin{subfigure}{\sizeImage\textwidth}
        \centering
        \includegraphics[width=\textwidth]{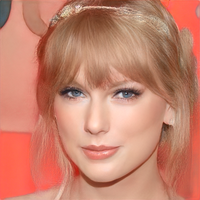}
    \end{subfigure}
    \begin{subfigure}{\sizeImage\textwidth}
        \centering
        \includegraphics[width=\textwidth]{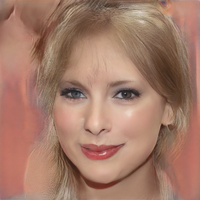}
    \end{subfigure}
    \begin{subfigure}{\sizeImage\textwidth}
        \centering
        \includegraphics[width=\textwidth]{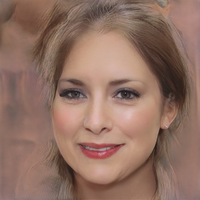}
    \end{subfigure}
    \end{center}
    \begin{center}
    \begin{subfigure}{\sizeImage\textwidth}
        \centering
        \includegraphics[width=\textwidth]{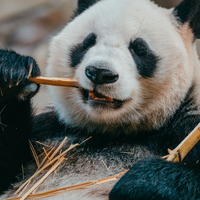}
    \end{subfigure}
    \Large{$\Rightarrow$}
    \begin{subfigure}{0.32\textwidth}
        \centering
        \includegraphics[width=\textwidth]{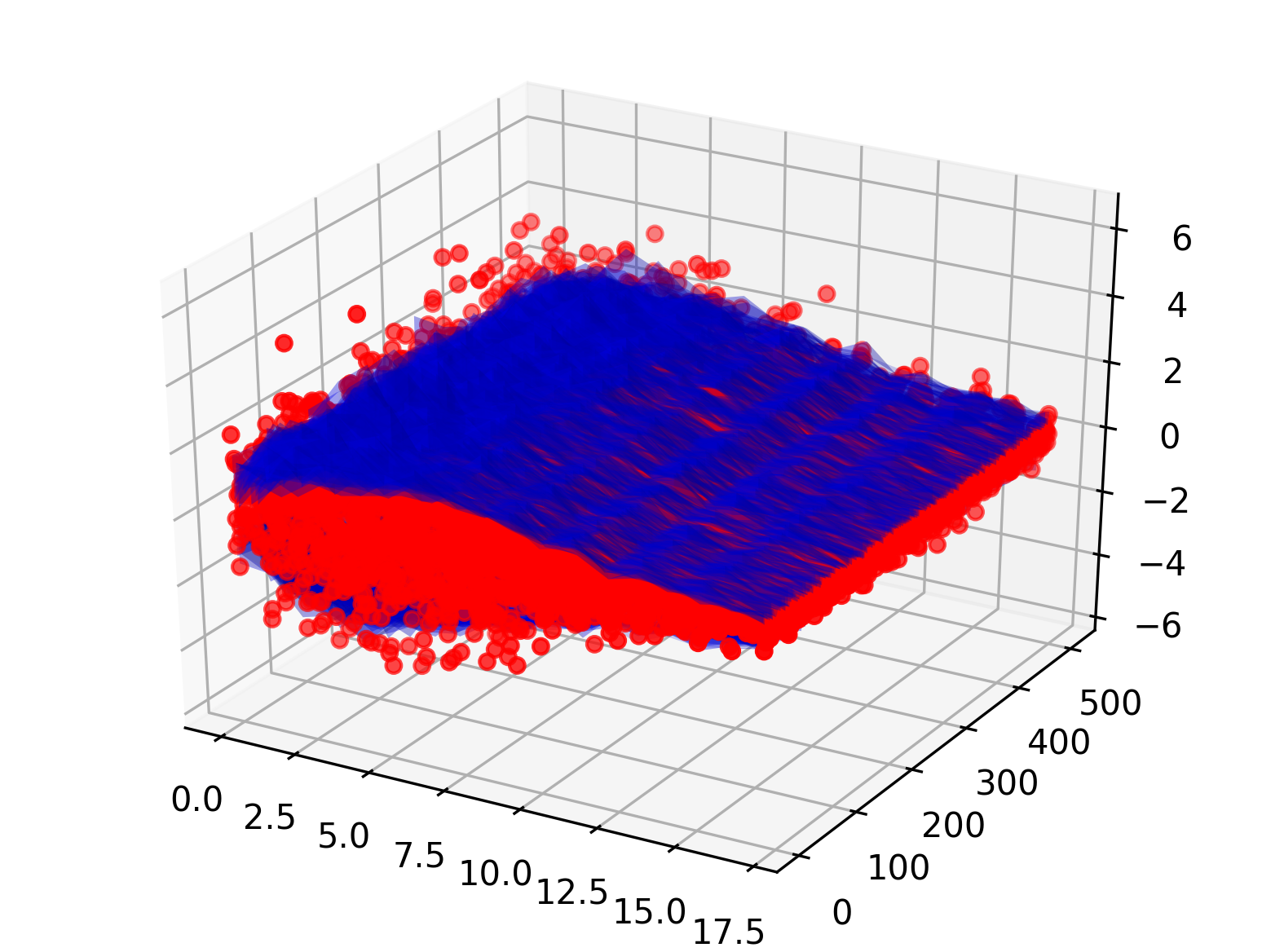}
    \end{subfigure}
    \Large{$\Rightarrow$}
    \begin{subfigure}{\sizeImage\textwidth}
        \centering
        \includegraphics[width=\textwidth]{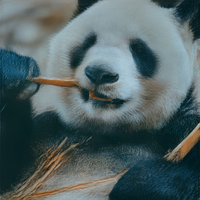}
    \end{subfigure}
    \begin{subfigure}{\sizeImage\textwidth}
        \centering
        \includegraphics[width=\textwidth]{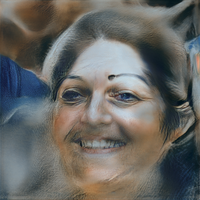}
    \end{subfigure}
    \begin{subfigure}{\sizeImage\textwidth}
        \centering
        \includegraphics[width=\textwidth]{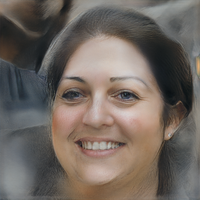}
    \end{subfigure}
    \end{center}
\caption{Clipping in the latent space. Given an input image, we first obtain its 
    latent code ({\color{red} red points}) using the feature optimization pipeline 
    in \cref{fig:feature_optimization_pipeline}. We then clip the latent code
    to make all the components fall within the lower and upper bounds ({\color{blue} blue surfaces})
    given by their distributions in the dataset.
    The clipped latent code are fed into the GAN model to generate the outputs.
    Note that none of the examples are in the training dataset.
    The three outputs (from left to right) are clipping at $0\%/100\%$, $15\%/85\%$, and $30\%/70\%$, respectively.
    }
\label{fig:dlatents_clipping}
\end{figure*}

Given a real world facial image, the problem of finding its corresponding latent representation can be considered as an optimization problem where we search the latent space to find a latent vector, from which the reconstructed image is close enough (and hopefully identical) to the query image.
\Cref{fig:feature_optimization_pipeline} illustrates the optimization pipeline.
Being trained in a reasonably large and representative image dataset (e.g., FlickrFaces-HQ (FFHQ)  \cite{karras2019style}), a GAN model is presumed to memorize and represent the universe of facial images.
We first start with a random latent vector $\omega$ and place it in a generator (e.g., StyleGAN~\cite{karras2019style}) to obtain a synthesized image.
Instead of comparing the query image with its synthesized counterpart in a pixel-wise manner, we leverage a feature extractor (VGG16~\cite{simonyan2014very}) to obtain latent representations of each image and compare the loss function (i.e., MSE) in the feature space, as deep feature loss has been shown superior to pixel loss in practice~\cite{zhang2018unreasonable}.
Afterward, we use gradient descent to update the latent vector $\omega$ until the generated image converges to the query one.

\subsection{Privacy Mechanism}\label{subsec:privacy_mechanism}
A key issue in using the Laplace mechanism for $\varepsilon$-differential privacy is determining the sensitivity:  How much changing one individual 
can impact the result.
Sensitivity is the maximum amount that the latent space could change by replacing one image to e privatized with \textbf{any} other image.

More formally, we want to determine the maximum the sensitive values could change if we replaced any possible input image with any other input image.  This would be the \emph{LDP sensitivity}, and adding noise commensurate with that difference would give us $\varepsilon$-differential privacy.

Unfortunately, the maximum possible difference in the latent space between any two input images is not only difficult to bound, but would result in untenable levels of noise.  Imagine, for example, a completely black and completely white image - vastly different, and not really interesting from a privacy point of view.  Furthermore, while those may be the greatest difference in the original space, we need to determine the greatest difference in the latent space, which is not directly related to pixel-level differences in the input.

\begin{algorithm}[t]
\caption{Sensitivity Calculation.}
\label{alg:sensitivity}
\begin{algorithmic}[1]
\REQUIRE{Dataset with identities $D = \{(X^{(i)}, id_i)\}_{i=1}^{n}$;}
\REQUIRE{Encoder $f: \X \rightarrow \Z$, where $\X$ represents the image space and $\Z \subseteq \R^m$ is the latent space with $m$ latent semantics;}
\FOR{each image $X^{(i)}$}
    \STATE{latent vector $Z^{(i)} \leftarrow f(X^{(i)})$;}
\ENDFOR
\FOR{each latent semantics $Z_j$}
    \STATE{local sensitivity $LS_j \leftarrow \underset{d(x, y) \le 1}{\max} || f(x) - f(y) ||_1$;}
\ENDFOR
\STATE{LDP sensitivity $S_L \leftarrow \max\limits_{x}~LS_{f}(x)$;}
\end{algorithmic}
\end{algorithm}

We use an idea from \emph{maximum observed sensitivity}\cite{MOS}.  They use the largest sensitivity observed across a dataset of significant size as a surrogate for the range of possible values.  In our case, the training data (for which we aren't concerned about privacy) is the large dataset; the maximum difference between any two images in the latent space could stand in for the range of possible values.  Unfortunately, this does not provide $\varepsilon$-differential privacy:  If we were given an unusual input image (say, someone standing on their head, or with particularly unusual features) it could fall outside these bounds, and result in a recognizable image.

Instead, we use the maximum observed sensitivity to \emph{clip} images in the latent space.
Any values that fall outside the observed bounds are clipped to the observed bounds, guaranteeing that the range of the input to the differential privacy mechanism is known, allowing us to determine sensitivity.
This allows us to fully satisfy $\varepsilon$-differential privacy.

Clipping the images does not come without cost.  While it ensures we satisfy differential privacy, an image that falls outside the ``normal'' values observed in the training data may be significantly distorted.
We show examples in \cref{fig:dlatents_clipping}.
The original image is on the left, followed by the 3D point cloud visualization of the latent code in the second column;
The third column are images clipped to the maximum and minimum values (i.e., $0\%$ and $100\%$) observed in a sample of 3500 of the training data images; 
the fourth and fifth columns are clipped at $15\%$ and $85\%$, and $30\%$ and $70\%$, respectively.

\begin{figure}[t]
\centering
\includegraphics[width=\sizeImage\textwidth]{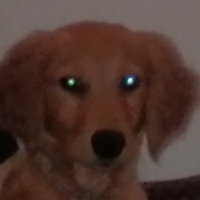}
~
\includegraphics[width=\sizeImage\textwidth]{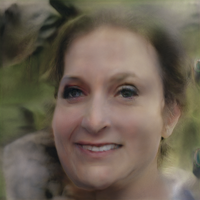}
\caption{
\textbf{Left} is the dog in \cref{fig:example},
and \textbf{right} is its differentially private face.
}
\label{fig:dpdog}
\end{figure}

Note that we do not claim the clipping itself provides differential privacy.
It enables us to bound the range of the input to the mechanism, so that the Laplace mechanism can be used to satisfy differential privacy.
In particular, it enables us to provide privacy for outliers.
Clipping forces outliers into the range of the training data, causing an image to be generated that resembles the training data when the actual input image is far from the training data.
An example is privatizing the dog from \cref{fig:example}, giving \cref{fig:dpdog}.
This results in very high distortion for outlying images, perhaps suggesting they are outliers, but still satisfying the formal definition of differential privacy.  It allows us to satisfy differential privacy with higher fidelity for images that bear closer resemblance to the training data.

\subsection{Algorithm}\label{subsec:algorithm}
We can now discuss how we provide $\varepsilon$-local differential privacy.  The idea is that the privacy budget $\varepsilon$ is divided among the various components in the latent space.  Each is used, along with the sensitivity derived from the clipping values for that component (based solely on the public training data), to determine a random draw of Laplace noise for that component,
which is again clipped (a postprocessing step).
This gives an $\varepsilon$-differentially private version of the image \emph{in the latent space}.  

\begin{algorithm}[t]
\caption{Differentially Private Imaging with Laplace Mechanism}
\label{alg:DP_imaging}
\begin{algorithmic}[1]
\REQUIRE{Input image  $X^{(i)}$;}
\REQUIRE{Encoder $f: \X \rightarrow \Z$;} 
\REQUIRE{Generator $g:\Z \rightarrow \X$;}
\REQUIRE{Latent space sensitivities ${S_L}_j$;}
\REQUIRE{Latent space weights $w_j$ s.t. $\sum w_j = 1$;}
\REQUIRE{Privacy parameter $\varepsilon$;}
\REQUIRE{Laplace distribution $Lap(0, \lambda)$;} 
\REQUIRE{Clipping function $f_c (i, j, \alpha)$;}
\STATE{latent vector $Z^{(i)} \leftarrow f(X^{(i)})$;}
\FOR{each latent semantics $Z^{(i)}_j$}
    \STATE{obtain a random $\delta$ from $Lap({S_L}_j \cdot w_j / \varepsilon)$;}
    \STATE{$Z'^{(i)}_j \leftarrow Z^{(i)}_j + \delta$;}
    \STATE{$Z''^{(i)}_j \leftarrow f_c ( Z'^{(i)}_j )$;}
\ENDFOR
\STATE{desired noisy image $X'^{(i)} \leftarrow g(Z''^{(i)})$;}
\end{algorithmic}
\end{algorithm}

We use this differentially private latent space version, with no reference to the original image or the latent space transformation of the original image, to generate an image using the previously described generative network.
The overall algorithm is given in \cref{alg:DP_imaging}.
Note that the feature optimization pipeline in \cref{fig:feature_optimization_pipeline} serves as the encoder $f$.

\subsection{Privacy Guarantee}\label{subsec:proof}
Remember that our goal is not to protect the individuals in the training data (these are assumed to be public, e.g., for the experiments in this paper the encoder and generator were trained using the FlickrFaces-HQ (FFHQ) dataset.)  The goal is to protect the individual in a \emph{new} image.  Therefore we assume that nothing in \cref{alg:DP_imaging} depends on the individual in the input image $X^{(i)}$ except what is explicitly shown in the algorithm.

\begin{theorem}\label{thm:proof}
    \Cref{alg:DP_imaging} provides $\varepsilon$-local differential privacy.
\end{theorem}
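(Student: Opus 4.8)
The plan is to verify the defining inequality of local differential privacy directly. Since each input is a single individual's image, $\varepsilon$-LDP requires that for \emph{any} two images $X, Y \in \X$ and every measurable set $S$ of outputs, the mechanism $M$ realized by \cref{alg:DP_imaging} satisfies $\Prob[M(X) \in S] \le e^{\varepsilon}\,\Prob[M(Y) \in S]$. The only randomness in $M$ is the vector of independent Laplace draws $\delta = (\delta_1, \dots, \delta_m)$, so I would first write $M(X) = g\big(f_c(f(X) + \delta)\big)$ and separate the deterministic maps from the stochastic one. By the post-processing invariance of differential privacy, both the final clip $f_c$ and the generator $g$ are fixed functions that access the private input only through the noisy latent vector $Z' = f(X) + \delta$; hence it suffices to prove that the release of $Z'$ is $\varepsilon$-LDP.

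The crux is a per-coordinate sensitivity bound, which is precisely what the latent-space clipping provides. Fixing a coordinate $j$, I would argue that because the latent code is clamped to an interval of width equal to the maximum observed range $S_{L_j}$ computed in \cref{alg:sensitivity}, the clipped values of any two inputs obey $|Z_j(X) - Z_j(Y)| \le S_{L_j}$. The essential subtlety is that these clipping bounds are computed solely from the public training data and never depend on the private image, so they are legitimate data-independent constants and the resulting sensitivity is a genuine worst-case bound valid even for out-of-distribution inputs such as the dog in \cref{fig:dpdog}, rather than a merely empirical surrogate.

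With the sensitivity in hand, I would apply the Laplace-mechanism estimate coordinatewise. Calibrating the noise so that coordinate $j$ consumes a share $\varepsilon_j$ of the budget via scale $\lambda_j$, the density ratio of the $j$-th noisy coordinate under $X$ versus $Y$ is bounded by $\exp(|Z_j(X) - Z_j(Y)| / \lambda_j) \le \exp(\varepsilon_j)$ by the triangle inequality applied to the Laplace exponent. Because the weights satisfy $\sum_j w_j = 1$, the budget is partitioned so that $\sum_j \varepsilon_j = \varepsilon$. The coordinates are perturbed with independent noise, so the joint density factorizes and the product of the per-coordinate ratios is at most $\exp\big(\sum_j \varepsilon_j\big) = e^{\varepsilon}$; this is just sequential composition of the $m$ per-coordinate mechanisms. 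Integrating this pointwise bound over $S$ yields the inequality for $Z'$, and the post-processing reduction from the first step then completes the proof.

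The step I expect to be the main obstacle is the sensitivity bound itself: rigorously justifying that clamping to the training-data range delivers a valid worst-case sensitivity for \emph{every} conceivable input, not just those resembling the training set. Care is also needed to keep the two clippings logically distinct --- it is the clamping of the input latent code \emph{before} noise is added that controls the sensitivity, whereas the clip $f_c$ shown after the noise in \cref{alg:DP_imaging} is only a fidelity-preserving post-processing step; conflating them would risk a circular sensitivity argument.
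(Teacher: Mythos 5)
Your proposal is correct and follows essentially the same route as the paper's own argument: a per-coordinate Laplace density-ratio bound with sensitivity furnished by the latent-space clipping, sequential composition over coordinates using $\sum_j w_j = 1$ to total $\varepsilon$, and the post-processing theorem to absorb the generator $g$ (and the final clip $f_c$). Your two flagged subtleties are in fact sharper than the paper's presentation: \cref{alg:DP_imaging} as printed applies $f_c$ only \emph{after} the noise (the pre-noise clamping that actually yields the bound $|Z_j(X)-Z_j(Y)| \le {S_L}_j$ appears only in the prose of \cref{subsec:privacy_mechanism}), and the printed scale ${S_L}_j \cdot w_j / \varepsilon$ would give a per-coordinate loss of $\varepsilon / w_j$ rather than $\varepsilon \cdot w_j$, so your calibration $\lambda_j = {S_L}_j/(\varepsilon\, w_j)$ is the one under which the composition actually sums to $\varepsilon$.
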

\begin{proof}
$\mathcal{M}$ is the randomized mechanism in \cref{alg:DP_imaging}.
Using the notations in \cite{erlingsson2014rappor} and above, we have
\begin{align*}
    \frac{\Pr[\mathcal{M}(v, f, \varepsilon) = s]}{\Pr[\mathcal{M}(v', f, \varepsilon) = s]}
    & = \frac{\Pr [Lap(S_L \cdot w_f / \varepsilon)] = s - f(v)}{\Pr [Lap(S_L \cdot w_f / \varepsilon)] = s - f(v')} \\
    & = \frac{\frac{S_L \cdot w_f}{\varepsilon} \cdot \exp({ - \frac{|s - f(v)| \varepsilon}{S_L \cdot w_f} })}{\frac{S_L}{\varepsilon} \cdot \exp({ -   \frac{|s - f(v')| \varepsilon}{S_L \cdot w_f} })} \\
    & = \exp(\frac{\varepsilon | f(v') - f(v) |}{S_L \cdot w_f}) \le \exp(\varepsilon) \\
    & = \exp(\frac{\varepsilon | f(v') - f(v) |}{S_L}) \le \exp(\varepsilon \cdot w_f)
\end{align*}
\end{proof}

Each component in the semantic space transformation of the image has noise added from a Laplace distribution.
From \cite{dwork2006calibrating}, we have that each component $Z'^{(i)}_j$ is $(\varepsilon \cdot w_f)$-differentially private.
Sequential composition gives $Z'^{(i)}$ is $\sum \varepsilon \cdot w_j = \varepsilon \sum w_j$ differentially private.  Since $\sum w_j = 1$, this shows that $Z'^{(i)}$ is $\varepsilon$-differentially private.  The remaining image generation step uses only the \emph{noise-added} version of the image in the semantic space.  Since no information from the individual in question is used in this or the generator $g$, the postprocessing theorem of differential privacy 
tells us that the output image is $\varepsilon$-differentially private.

Some may ask why we do not use a parallel composition argument, since the noise is added independently to each component.  The problem is that parallel composition requires that the components be from disjoint individuals; this would be like saying we want to avoid identifying an individual's hairstyle and smile, rather than protecting against identifying the individual.

Note that this makes the assumption that not only is the image to be protected not in the training data, but that the \emph{individual pictured} is not in the training data (or more specifically, not in the data used to train the image generator $g$.)

\section{Empirical Results}\label{sec:experiment}
The previous section shows how we can achieve a differentially private image, and why we might expect it to produce reasonable images..
We evaluate the proposed method and its results, both qualitatively and quantitatively, using real world images.

\subsection{Dataset}
For experiments, we adopt the FlickrFaces-HQ (FFHQ) dataset~\cite{karras2019style} collected by NVIDIA, consisting of 70,000 high-resolution ($1024 \times 1024$) human facial images.
This dataset covers a wide spectrum of human faces, including variations in age, ethnicity, and image backgrounds; crawled from Flickr.
To resolve computational issues, we use a
randomly-selected
subset 
of 3500 images for these experiments.
We align and crop the images using Dlib\footnote{\url{http://dlib.net/}}.
All results reported in this paper 
are based on the aligned and cropped dataset.
All the images shown in the paper are publicly available or taken by the authors, and none of them are in the training dataset.

\subsection{Evaluation}

We first show what happens with small values of $\varepsilon$. \Cref{fig:small_epsilon} demonstrates $\varepsilon=1, \dots, 5000$; we can see that the images are not very useful.  (Although with sufficient clipping, they are recognizably human.)
Note that $\varepsilon=1$ is very strict; a way to think of this is that if the adversary knows the image is of either Barack Obama or Hillary Clinton, we are adding enough noise that the adversary's best guess would be right 75\% of the time (as opposed to 50\% without seeing the image.)  This does hold for these images, even knowing that \cref{fig:small_epsilon} is either Obama or Clinton, anyone (correctly) guessing that it was generated from an image of President Obama would have little confidence in that guess.

Our use case is much different; the image may be known to be part of a large community (e.g., it is taken on a college campus, so likely belongs to someone on that campus), but could belong to any of thousands of people in that group.  This enables a much larger $\varepsilon$ without a significant risk of re-identification \emph{in the absence of other information providing a strong expectation on who the image belongs to.}  For more discussion of setting $\varepsilon$, see \cite{DiffIdKDD12,LiMembership}.

The remainder of the images we show are with much larger $\varepsilon$.
For example, in \cref{fig:teaser}, if you knew the names of the authors, you would have a good shot at guessing which picture went with which author.  But in a double-blind review process, where the authors could be any of the thousands of people who might submit to Privacy Enhancing Technologies, identifying who the authors are is infeasible even at $\varepsilon=9216$.

\begin{figure}[t]
\begin{center}
\begin{minipage}[b]{\sizeImage\textwidth}\centering
    $\bm{\varepsilon = 1}$ \\
    \vspace{1mm}
    \includegraphics[width=\textwidth]{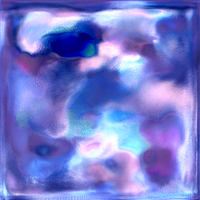}
\end{minipage}
\hspace{\sizeHorizontalSpace}
\begin{minipage}[b]{\sizeImage\textwidth}\centering
    $\bm{\varepsilon = 1000}$ \\
    \vspace{1mm}
    \includegraphics[width=\textwidth]{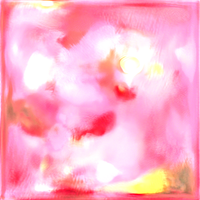}
\end{minipage}
\hspace{\sizeHorizontalSpace}
\begin{minipage}[b]{\sizeImage\textwidth}\centering
    $\bm{\varepsilon = 5000}$ \\
    \vspace{1mm}
    \includegraphics[width=\textwidth]{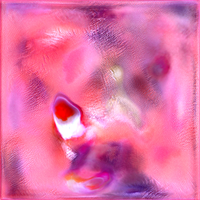}
\end{minipage}
\\
\begin{minipage}[b]{\sizeImage\textwidth}\centering
    \includegraphics[width=\textwidth]{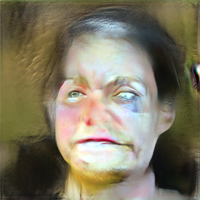}
\end{minipage}
\hspace{\sizeHorizontalSpace}
\begin{minipage}[b]{\sizeImage\textwidth}\centering
    \includegraphics[width=\textwidth]{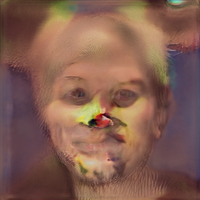}
\end{minipage}
\hspace{\sizeHorizontalSpace}
\begin{minipage}[b]{\sizeImage\textwidth}\centering
    \includegraphics[width=\textwidth]{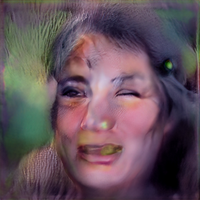}
\end{minipage}
\end{center}
\caption{Examples of
image privatized using small values of $\varepsilon$, providing privacy against an adversary with strong expectations of who the image might be.
The first row has no clipping; the second row is clipped at $10\%/90\%$.
In comparison, \cref{fig:privacy_budget} uses larger values of $\varepsilon$,
which leads to better perceptual quality.
}
\label{fig:small_epsilon}
\end{figure}

\begin{figure*}[hbtp]
\begin{center}
\setlength{\tabcolsep}{.15em}
\begin{tabular}{ccccccc}
    $Original$ & $\varepsilon_{ij} = \infty$ & $\varepsilon_{ij} = 64$ & $\varepsilon_{ij} = 32$ & $\varepsilon_{ij} = 16$ & $\varepsilon_{ij} = 8$ & $\varepsilon_{ij} = 4$ \\
    \includegraphics[width=\sizeImage\textwidth]{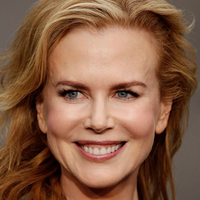}
    & \hspace{\sizeHorizontalSpace}
    \includegraphics[width=\sizeImage\textwidth]{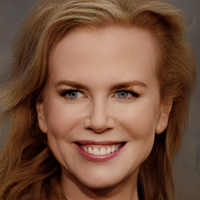}
    & \hspace{\sizeHorizontalSpace}
    \includegraphics[width=\sizeImage\textwidth]{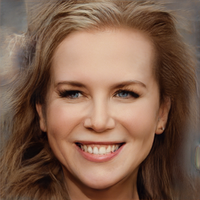}
    & \hspace{\sizeHorizontalSpace}
    \includegraphics[width=\sizeImage\textwidth]{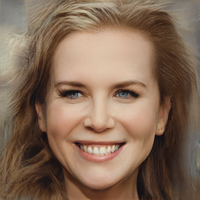}
    & \hspace{\sizeHorizontalSpace}
    \includegraphics[width=\sizeImage\textwidth]{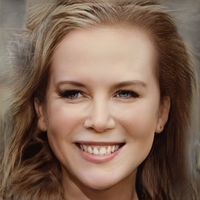}
    & \hspace{\sizeHorizontalSpace}
    \includegraphics[width=\sizeImage\textwidth]{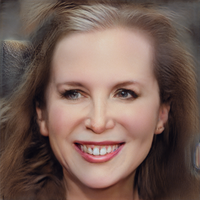}
    & \hspace{\sizeHorizontalSpace}
    \includegraphics[width=\sizeImage\textwidth]{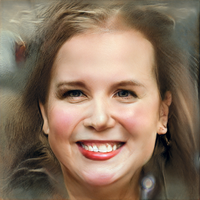}
    \\[-3.5pt]
    \includegraphics[width=\sizeImage\textwidth]{figures_teaser_kidman_cropped.png}
    & \hspace{\sizeHorizontalSpace}
    \includegraphics[width=\sizeImage\textwidth]{figures_teaser_kidman_optimized.png}
    & \hspace{\sizeHorizontalSpace}
    \includegraphics[width=\sizeImage\textwidth]{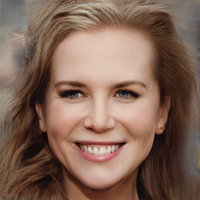}
    & \hspace{\sizeHorizontalSpace}
    \includegraphics[width=\sizeImage\textwidth]{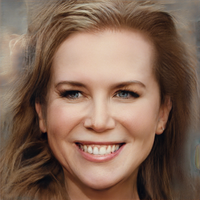}
    & \hspace{\sizeHorizontalSpace}
    \includegraphics[width=\sizeImage\textwidth]{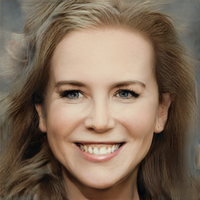}
    & \hspace{\sizeHorizontalSpace}
    \includegraphics[width=\sizeImage\textwidth]{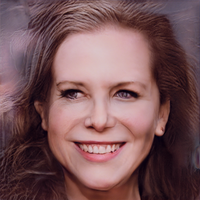}
    & \hspace{\sizeHorizontalSpace}
    \includegraphics[width=\sizeImage\textwidth]{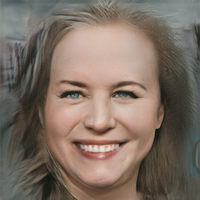}
    \\[-1pt]
    \includegraphics[width=\sizeImage\textwidth]{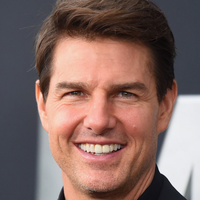}
    & \hspace{\sizeHorizontalSpace}
    \includegraphics[width=\sizeImage\textwidth]{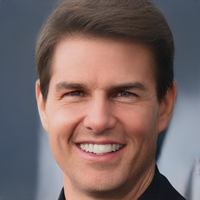}
    & \hspace{\sizeHorizontalSpace}
    \includegraphics[width=\sizeImage\textwidth]{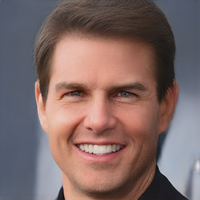}
    & \hspace{\sizeHorizontalSpace}
    \includegraphics[width=\sizeImage\textwidth]{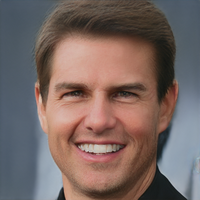}
    & \hspace{\sizeHorizontalSpace}
    \includegraphics[width=\sizeImage\textwidth]{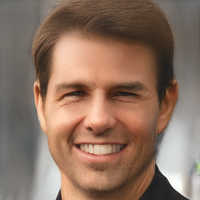}
    & \hspace{\sizeHorizontalSpace}
    \includegraphics[width=\sizeImage\textwidth]{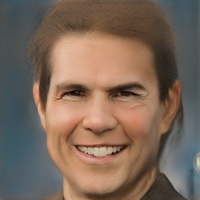}
    & \hspace{\sizeHorizontalSpace}
    \includegraphics[width=\sizeImage\textwidth]{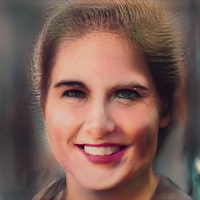}
    \\[-3.5pt]
    \includegraphics[width=\sizeImage\textwidth]{figures_teaser_cruise_cropped.png}
    & \hspace{\sizeHorizontalSpace}
    \includegraphics[width=\sizeImage\textwidth]{figures_teaser_cruise_optimized.png}
    & \hspace{\sizeHorizontalSpace}
    \includegraphics[width=\sizeImage\textwidth]{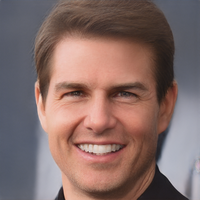}
    & \hspace{\sizeHorizontalSpace}
    \includegraphics[width=\sizeImage\textwidth]{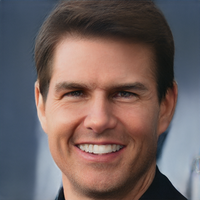}
    & \hspace{\sizeHorizontalSpace}
    \includegraphics[width=\sizeImage\textwidth]{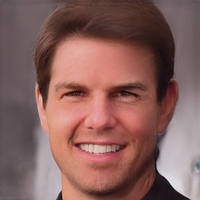}
    & \hspace{\sizeHorizontalSpace}
    \includegraphics[width=\sizeImage\textwidth]{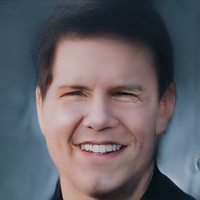}
    & \hspace{\sizeHorizontalSpace}
    \includegraphics[width=\sizeImage\textwidth]{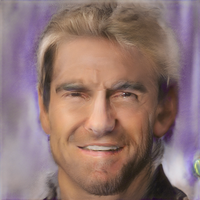}
    \\
    \multicolumn{7}{c}{Clipping at $12.5\%$ and $87.5\%$.}\\
\end{tabular}
\end{center}
\begin{center}
\setlength{\tabcolsep}{.15em}
\begin{tabular}{ccccccc}
    $Original$ & $\varepsilon_{ij} = \infty$ & $\varepsilon_{ij} = 16$ & $\varepsilon_{ij} = 8$ & $\varepsilon_{ij} = 4$ & $\varepsilon_{ij} = 2$ & $\varepsilon_{ij} = 1$ \\
    \includegraphics[width=\sizeImage\textwidth]{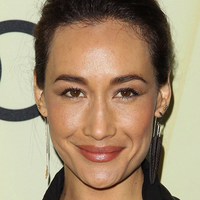}
    & \hspace{\sizeHorizontalSpace}
    \includegraphics[width=\sizeImage\textwidth]{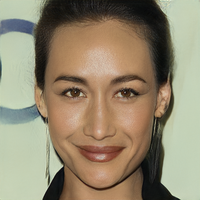}
    & \hspace{\sizeHorizontalSpace}
    \includegraphics[width=\sizeImage\textwidth]{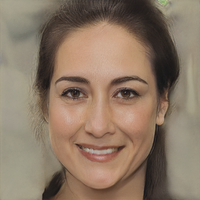}
    & \hspace{\sizeHorizontalSpace}
    \includegraphics[width=\sizeImage\textwidth]{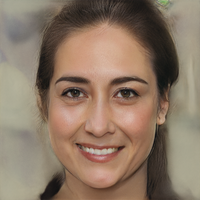}
    & \hspace{\sizeHorizontalSpace}
    \includegraphics[width=\sizeImage\textwidth]{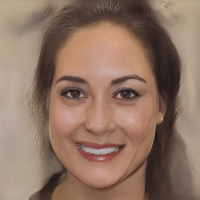}
    & \hspace{\sizeHorizontalSpace}
    \includegraphics[width=\sizeImage\textwidth]{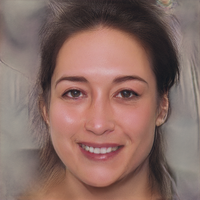}
    & \hspace{\sizeHorizontalSpace}
    \includegraphics[width=\sizeImage\textwidth]{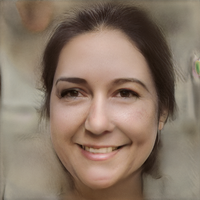}
    \\[-3.5pt]
    \includegraphics[width=\sizeImage\textwidth]{figures_teaser_nikita2_cropped.png}
    & \hspace{\sizeHorizontalSpace}
    \includegraphics[width=\sizeImage\textwidth]{figures_teaser_nikita2_optimized.png}
    & \hspace{\sizeHorizontalSpace}
    \includegraphics[width=\sizeImage\textwidth]{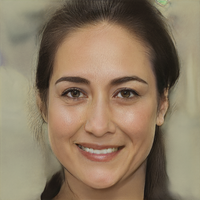}
    & \hspace{\sizeHorizontalSpace}
    \includegraphics[width=\sizeImage\textwidth]{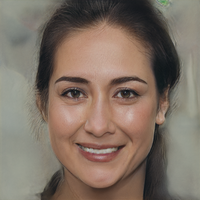}
    & \hspace{\sizeHorizontalSpace}
    \includegraphics[width=\sizeImage\textwidth]{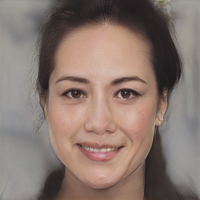}
    & \hspace{\sizeHorizontalSpace}
    \includegraphics[width=\sizeImage\textwidth]{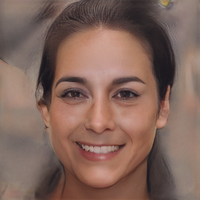}
    & \hspace{\sizeHorizontalSpace}
    \includegraphics[width=\sizeImage\textwidth]{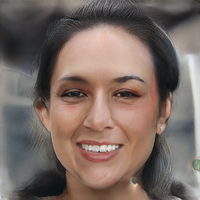}
    \\[-1pt]
    \includegraphics[width=\sizeImage\textwidth]{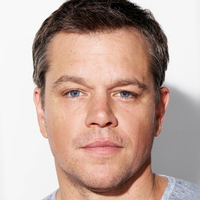}
    & \hspace{\sizeHorizontalSpace}
    \includegraphics[width=\sizeImage\textwidth]{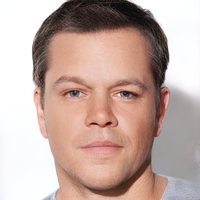}
    & \hspace{\sizeHorizontalSpace}
    \includegraphics[width=\sizeImage\textwidth]{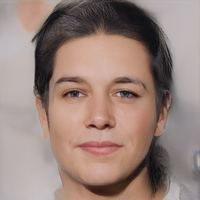}
    & \hspace{\sizeHorizontalSpace}
    \includegraphics[width=\sizeImage\textwidth]{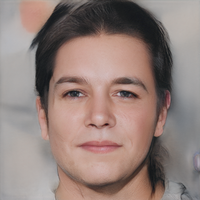}
    & \hspace{\sizeHorizontalSpace}
    \includegraphics[width=\sizeImage\textwidth]{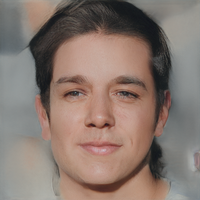}
    & \hspace{\sizeHorizontalSpace}
    \includegraphics[width=\sizeImage\textwidth]{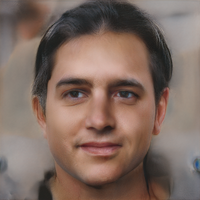}
    & \hspace{\sizeHorizontalSpace}
    \includegraphics[width=\sizeImage\textwidth]{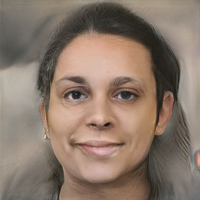}
    \\[-3.5pt]
    \includegraphics[width=\sizeImage\textwidth]{figures_teaser_bourne_cropped.png}
    & \hspace{\sizeHorizontalSpace}
    \includegraphics[width=\sizeImage\textwidth]{figures_teaser_bourne_optimized.png}
    & \hspace{\sizeHorizontalSpace}
    \includegraphics[width=\sizeImage\textwidth]{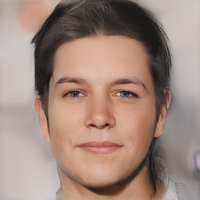}
    & \hspace{\sizeHorizontalSpace}
    \includegraphics[width=\sizeImage\textwidth]{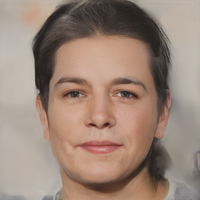}
    & \hspace{\sizeHorizontalSpace}
    \includegraphics[width=\sizeImage\textwidth]{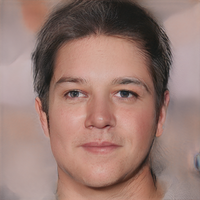}
    & \hspace{\sizeHorizontalSpace}
    \includegraphics[width=\sizeImage\textwidth]{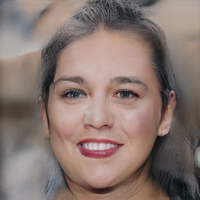}
    & \hspace{\sizeHorizontalSpace}
    \includegraphics[width=\sizeImage\textwidth]{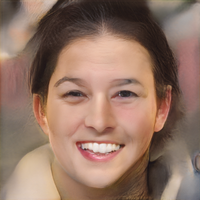}
    \\
    \multicolumn{7}{c}{Clipping at $25\%$ and $75\%$.}\\
\end{tabular}
\end{center}
\caption{Experimental results with different privacy and clipping settings. 
For each identity, two groups of experimental results under the same settings are given.
They produce different outputs because of the randomness of the mechanism.
Note that the number of latent components is $18 \times 512 = 9216$ for our experiments
and privacy loss $\varepsilon = \sum \varepsilon_{ij} = 9216 \cdot \varepsilon_{ij}$.
}
\label{fig:privacy_budget}
\end{figure*}

\begin{figure*}[t]
\centering
\includegraphics[width=0.32\textwidth]{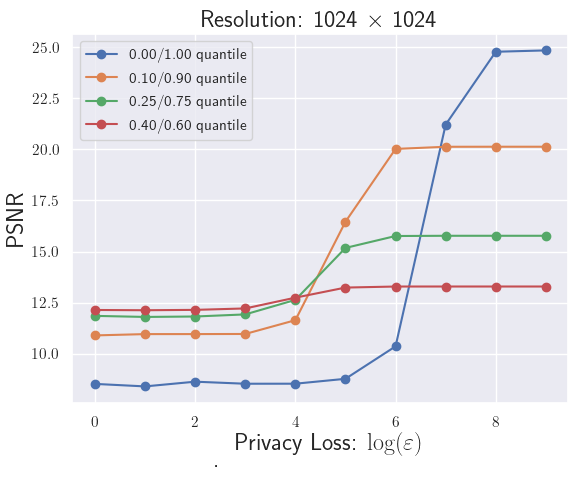}
\includegraphics[width=0.32\textwidth]{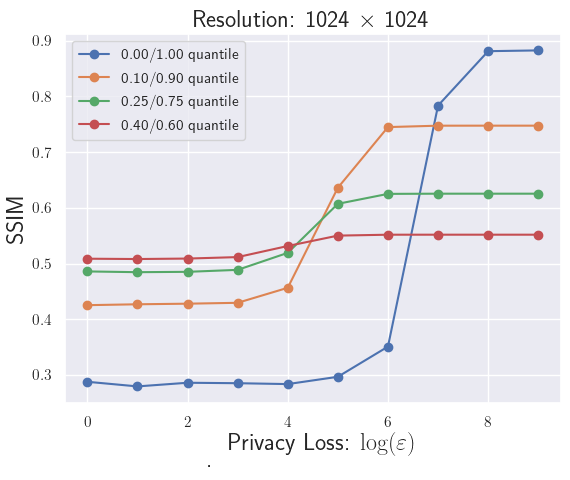}
\includegraphics[width=0.32\textwidth]{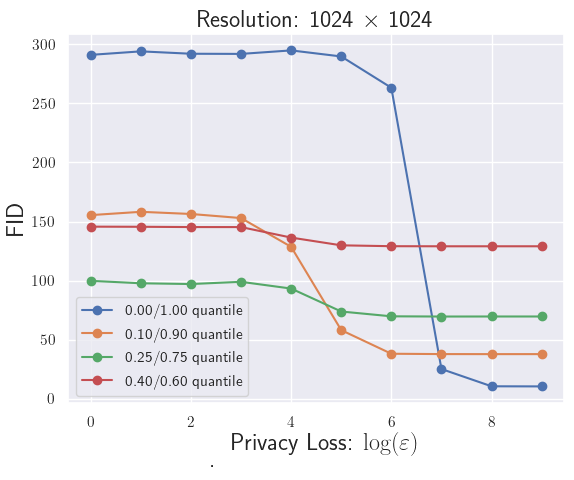}
\caption{
    Trade-offs between privacy and utility.
By varying privacy parameter $\varepsilon$ in the latent space, the pixels vary accordingly.
We show SSIM, PSNR, and FID with respect to privacy loss,
and the larger the privacy loss is, the closer the image to its original.
Above figures demonstrate that both pixel-wise distortion and perceptual distance become smaller as the privacy loss $\varepsilon$ increases,
indicating that the less noise added, the closer the generated image to its original identity, and vice versa.
This aligns with our intuition, as presumably each latent value controls a group of pixels.
}
\label{fig:metrics_merged}
\end{figure*}

With a larger of $\varepsilon$ and proper clipping, \cref{fig:privacy_budget} 
has much more visually pleasing results.
It shows results under various settings.
For each identity, we show two groups of experimental results under the same setting.
They have different outputs because of the randomness of the mechanism.
The first column is the inputs and the second column is the generated results from the image encoding pipeline,
where the synthesized images are optimized to be as close to the original as possible (i.e., $\varepsilon = \infty$).
The remaining columns showcase outputs under different noise levels.
With a large noise (i.e., a smaller $\varepsilon$), the output image is less similar to the original (i.e., more private)
while still maintaining some fidelity (e.g., it is still a human face sharing some features with the original).

Clipping also plays an important role in this process to  maintain the perceptual quality of the image.
Under the same noise level, a heavily clipped output has better visual quality than one without clipping, although it loses more specifics of the original.
Notice that in \cref{fig:dlatents_clipping}, an input image of an animal after clipping results in a human face.
Even a white noise input, with substantial clipping, appears to show a human face (since this is what the training data consists of.)
This basically shows what surfaces in the latent space look like.

\Cref{fig:metrics_merged} quantitatively evaluate the outputs from the proposed method
with  different privacy and clipping settings.
PSNR and SSIM measure the level of distortion at the pixel level;
while FID captures the differences at the semantic level.
Four clipping settings are tested, $0\%/100\%$, $10\%/90\%$, $25\%/75\%$, and $40\%/60\%$,
each of which corresponds to a line in the figures.
The trends are clear that clipping makes the outputs more robust to noise;
and those without clipping would have greater distortion as noise increase.
These results align with \cref{fig:privacy_budget} as well as our intuition.

\section{Conclusion}\label{sec:conclusion}
In this work, we provide the first meaningful formal definition of $\varepsilon$-differential privacy for images
by leveraging the latent space of images and Laplace mechanism.
A practical framework is presented to tackle real world images.
Experimental results show that the proposed mechanism is able to preserve privacy in accordance with privacy budget $\varepsilon$
while maintain high perceptual quality for sufficiently large values of $\varepsilon$.

For a practical example of such a mechanism, assume a differentially private high-$\varepsilon$ image is posted on a social media site.  A face recognition system to automatically tag people in the image may be able to correctly tag the poster, and friends of the poster -- subjects that the poster would probably not choose to anonymize anyway.  But even with high $\varepsilon$, attempts to identify others in the image who are anonymized, while significantly better than a random guess, would still have extremely low confidence.

There is still considerable room for improvement.  We have split the privacy budget evenly between components; varying this split may result in significantly better quality.  Varying privacy budget between semantic components could be used to adjust what is preserved (e.g., preserving pose or emotional state at the expense of lower fidelity to age or gender.)  Methods based on the exponential mechanism of differential privacy rather than noise addition are likely to provide more realistic images, but with less relationship to the original.  The key is that all of these build on the same basic concept:  Defining modifications in the latent space such that privacy is provided for the \emph{person}.

{\small
\bibliographystyle{ieee_fullname}
\bibliography{db.bib}
}

\end{document}